\newtheorem{theorem}{Theorem}
\newtheorem{proof}{Proof}
\crefname{section}{Sec.}{Secs.}
\Crefname{section}{Section}{Sections}
\Crefname{table}{Table}{Tables}
\crefname{table}{Tab.}{Tabs.}
\begin{document}

\title{Clue Me In: Semi-Supervised FGVC with Out-of-Distribution Data}

\author{Ruoyi Du, Dongliang Chang, Zhanyu Ma$^*$, Yi-Zhe Song, and Jun Guo
}
\maketitle

\let\thefootnote\relax\footnotetext{R. Du, D. Chang, Z. Ma, Y.-Z. Song, and J. Guo are with the Pattern Recognition and Intelligent System Laboratory, School of Artiﬁcial Intelligence, Beijing University of Posts and Telecommunications, Beijing 100876, China. E-mail:\{beiyoudry, changdongliang, mazhanyu, guojun\}@bupt.edu.cn}
\let\thefootnote\relax\footnotetext{Y.-Z. Song is with SketchX, CVSSP, University of Surrey, London, United Kingdom. E-mail: y.song@surrey.ac.uk\\
\\
(Corresponding author: Zhanyu Ma)}

\begin{abstract}
   Despite great strides made on fine-grained visual classification (FGVC), current methods are still heavily reliant on fully-supervised paradigms where ample expert labels are called for. Semi-supervised learning (SSL) techniques, acquiring knowledge from unlabeled data, provide a considerable means forward and have shown great promise for coarse-grained problems. However, exiting SSL paradigms mostly assume in-distribution (\emph{i.e.}, category-aligned) unlabeled data, which hinders their effectiveness when re-proposed on FGVC. In this paper, we put forward a novel design specifically aimed at making out-of-distribution data work for semi-supervised FGVC, \emph{i.e.}, to ``clue them in''. We work off an important assumption that all fine-grained categories naturally follow a hierarchical structure (\emph{e.g.}, the phylogenetic tree of ``Aves'' that covers all bird species). It follows that, instead of operating on individual samples, we can instead predict sample \textit{relations} within this tree structure as the optimization goal of SSL. Beyond this, we further introduced two strategies uniquely brought by these tree structures to achieve inter-sample consistency regularization and reliable pseudo-relation. Our experimental results reveal that (i) the proposed method yields good robustness against out-of-distribution data, and (ii) it can be equipped with prior arts, boosting their performance thus yielding state-of-the-art results. Code is available at \url{https://github.com/PRIS-CV/RelMatch}.
\end{abstract}
\vspace{-0.1cm}
\section{Introduction}

Progress on computer vision had been heavily reliant on having access to annotated data. This however represents a heavy constraint for the problem of fine-grained visual classification (FGVC), where labels can only come from experts, \emph{i.e.}, people who can tell the difference between a ``American Crow'' and ``Fish Crow''. This essentially means despite great strides made~\cite{xiao2015application,lin2015bilinear,fu2017look,wang2018learning,dubey2018pairwise,chen2019destruction,du2020fine,du2021progressive}, our understanding of FGVC largely remains limited to fully supervised paradigms on categories where labels had been expertly curated for, \emph{e.g.}, Bird~\cite{wah2011caltech}, Flower~\cite{Nilsback2008Automated}, Cars~\cite{krause20133d}.

Elsewhere for coarse-grain problems, significant progress has already been made to address this ``lack of label'' phenomenon, notably through the means of semi-supervised learning (SSL)~\cite{grandvalet2004semi}. It can however be argued that as important as SSL is, the option of falling back to fully supervised remains conceivable for coarse-grain problems, \emph{i.e.}, more human (non-expert) labels can always be sourced with effort (and is arguably common practice in industry). The same nonetheless does not hold for FGVC, \textit{regardless} of resource and effort -- ultimately, there simply may never be enough experts available to label. Indeed, having sufficient labelled training data for FGVC might be a false argument to start with, thus making SSL an even more significant endeavour for FGVC.

\begin{figure}[t]
\centering
\includegraphics[width=1\linewidth]{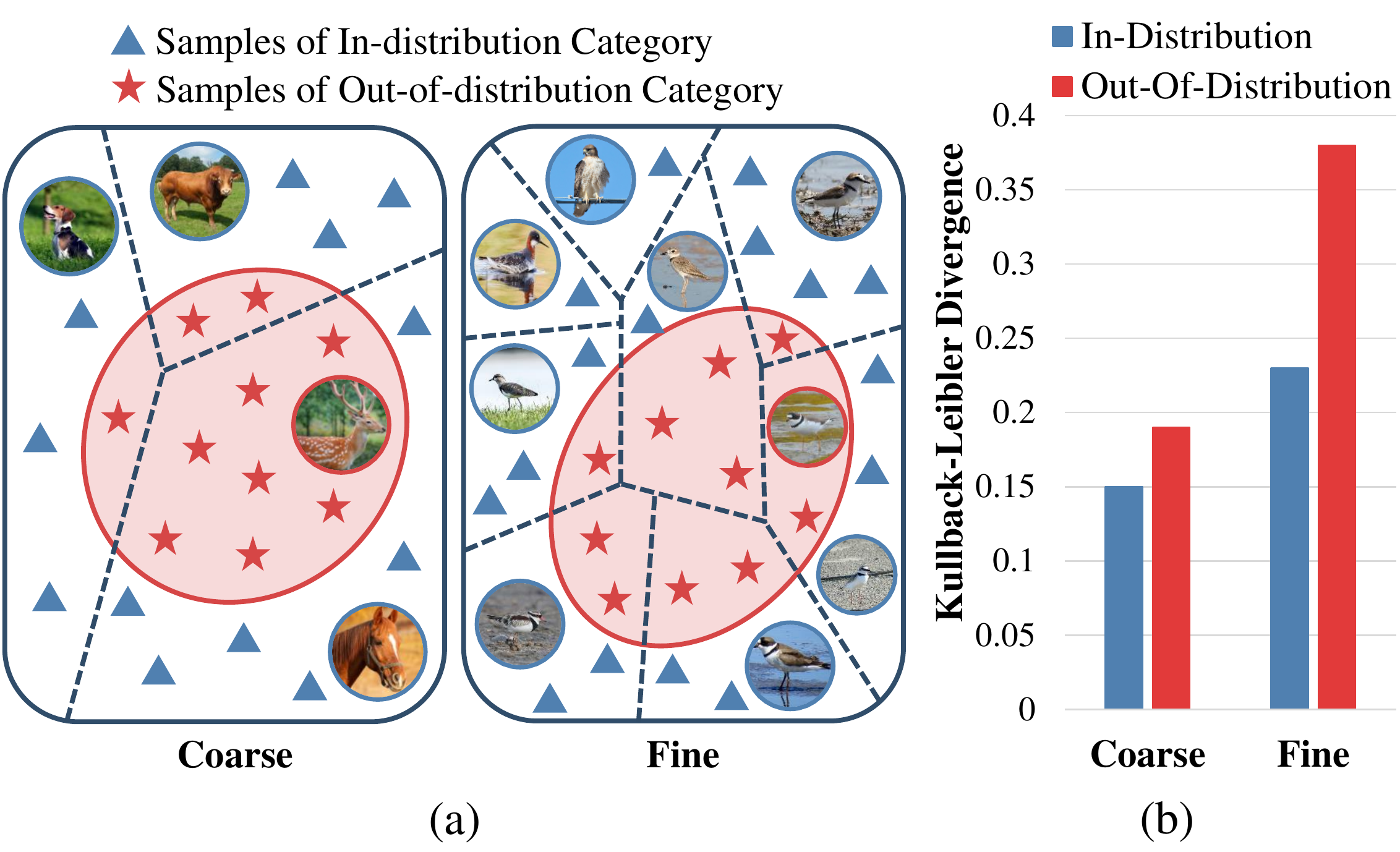}
\caption{(a) Illustration of feature spaces for coarse-grained classification and fine-grained classification. The \textcolor{Blue}{blue lines} indicate the decision boundaries and the \textcolor{Red}{red circle} represents the distribution of out-of-distribution category in the feature space. (b) The average Kullback-Leibler divergence of models' prediction probabilities to the probability distribution centers.}
\label{fig:figure1}
\vspace{-0.3cm}
\end{figure}

Semi-supervised learning in the context of FGVC is however non-trivial. The difficulty is largely owing to the overwhelming presence of out-of-distribution samples in unlabeled data. This is easily observable in Figure~\ref{fig:figure1}(a), where when compared with coarse-grain, the feature space of the fine-grained classifier is a lot more compact making new out-of-distribution unlabelled data (in red circle) much easier to be confused. To further verify our point, we simulate two situations where we make certain categories either in-distribution or out-of-distribution to the model and measure how dispersed the probability distributions of model predictions are via Kullback-Leibler Divergence. From Figure~\ref{fig:figure1}(b), we can conclude that (i) the model generally yields more inconsistent predictions for an out-of-distribution category, and (ii) this phenomenon is considerably more significant for the fine-grained model. This essentially renders most of the existing SSL work that rely on pseudo-labeling~\cite{lee2013pseudo,cascante2020curriculum,nassar2021all} or consistency regularization~\cite{bachman2014learning,laine2016temporal,sajjadi2016regularization,tarvainen2017mean} significantly less effective when re-purposed for FGVC. This is because that they mostly work with the assumptions that (i) acquiring in-distribution unlabeled data is relatively easy, and (ii) out-of-distribution is not as salient a problem given the coarse decision space.

In this paper, we put forward a novel design that specifically aims at making out-of-distribution data work for the problem of FGVC -- \emph{i.e.}, to ``clue them in''. To reason around the inherently tight decision space of FGVC is however not straightforward -- we need all the help we can get. For that, we take inspiration from~\cite{chang2021your} to utilize a label hierarchy of fine-grained categories (\emph{e.g.}, the phylogenetic tree of ``Aves''). As shown in Figure~\ref{fig:figure2}, the phylogenetic tree provides an underlying structure that all bird species naturally obey. For example, the model can not tell the unlabeled bird is a ``Collared Plover'', since it is totally out of the training label space. But the model can justifiably infer that it belongs to the same genus as ``Semipalmated Plover'' but not ``Semipalmated Plover'', \emph{i.e.}, owning the relation of ``Same Genus, Different Species'' with ``Semipalmated Plover'' as defined in Figure~\ref{fig:figure2}. It follows that our main innovation lies with predicting sample \textit{relations} within this hierarchy other than operating on individual samples.

More specifically, we utilize a simple multi-layer perceptron (MLP) that takes representations of sample pairs as input to work as a relation classifier. Predicting the relation of two birds can be regarded as a \emph{close-set} classification problem because the defined phylogenetic tree includes all expert bird species. Therefore, by replacing the instance-level prediction goal in previous pseudo-labeling techniques with this relation-based prediction, we construct a novel \textit{relation-based} pseudo-labeling strategy that importantly yields a common label space for both in-distribution and out-of-distribution data. 

We further propose two novel strategies taking full advantage of the tree-structured hierarchy to better align in-distribution and out-of-distribution samples. First, we re-purpose the concept of \emph{rooted triplet}~\cite{jansson2005rooted}, and introduce a \emph{triplet consistency} mechanism to enable inter-sample regularization of samples (see Figure~\ref{fig:figure2} for an example of rooted triplet). We conjecture that for any three unlabeled samples that form a \emph{rooted triplet}, exactly one of them will form consistent relations with the other two (see Figure~\ref{fig:figure3} (b) and Section~\ref{sub-sub-sec-rooted-triplet} for proof). Second, in the scenario where two of three leaf samples are labeled, we can further infer consistency between unlabeled relations and ground-truths relations (see Figure~\ref{fig:figure3} (d) and Section~\ref{sub-sub-sec-label-transfer}). Through this strategy we call label transfer, we significantly improve the quality of pseudo-relations learned which in turn helps with alignment.

We conduct experiments on semi-supervised FGVC benchmark datasets published by~\cite{su2021realistic} under both their conventional setting and the realistic setting. Compared with state-of-the-art semi-supervised and self-supervised methods, we achieve better or comparable performance. In addition, we show that (i) the proposed framework can be easily integrated with prior arts in a plug-and-play manner, further boosting their performance, and (ii) our relation-based approach exhibits good robustness against out-of-distribution data and is able to achieve performance gains when trained from scratch. Ablative study further shows our method can also work with categories that are completely out of our phylogenetic tree (\emph{e.g.,} the whole of ImageNet~\cite{russakovsky2015imagenet}).

\begin{figure*}[t]
\centering
\includegraphics[width=0.85\linewidth]{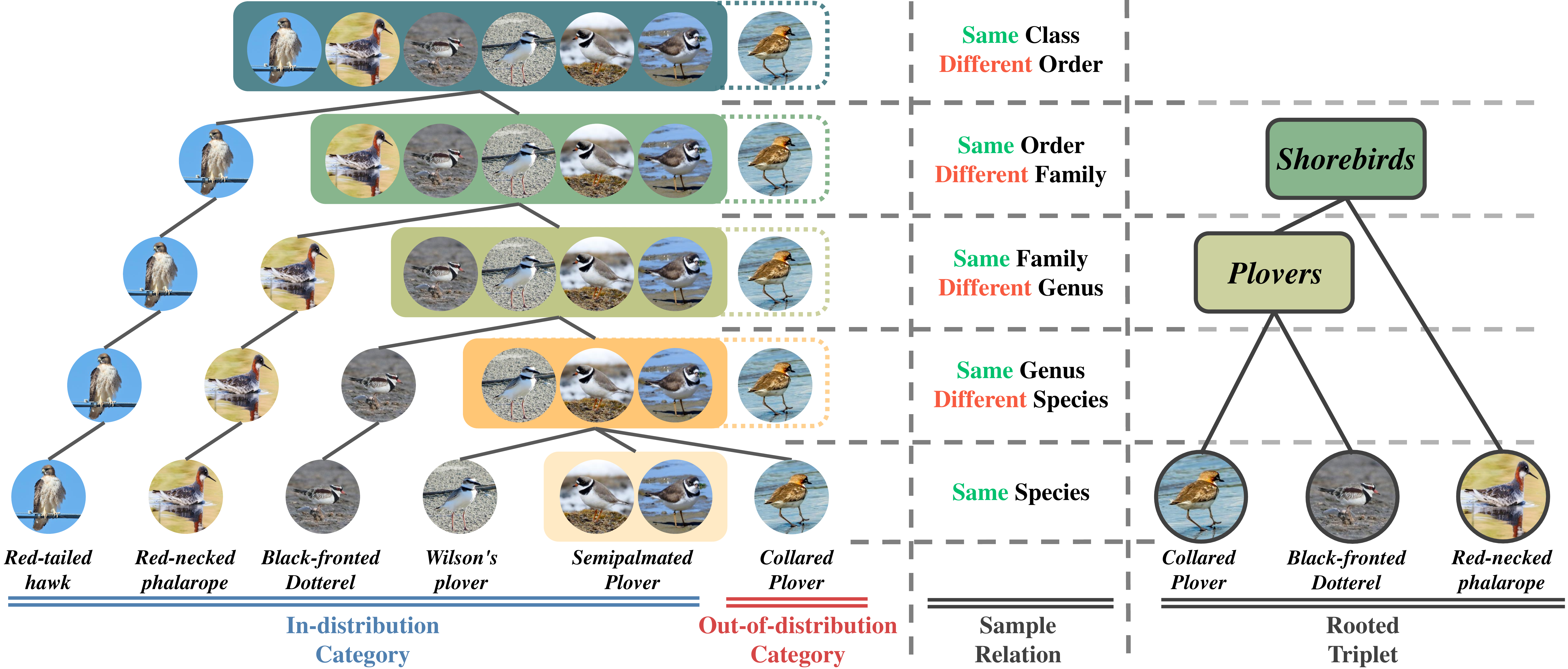}
\caption{Illustration of the underlying phylogenetic tree that both in-distribution samples and out-of-distribution samples should obey (shown on the left). Taking ``Aves'' classification as an example, according to the phylogenetic tree, the sample relations can be divided into $5$ types (shown in the middle). Three samples that form a rooted binary sub-tree called \emph{rooted triplet} (shown on the right).}
\label{fig:figure2}
\vspace{-0.3cm}
\end{figure*}

\section{Related Work}

\subsection{Fine-Grained Visual Classification}

Fine-grained visual classification (FGVC) tends to identify sub-categories that belong to a general class, \emph{e.g.}, distinguishing ``American Crow'' and ``Fish Crow''. With a history of almost two decades~\cite{biederman1999subordinate}, it has become a core problem of computer vision with many excellent works related (well summarized in a most recent survey~\cite{wei2021survey}).

Early works mostly heavily relied on dense bounding-box annotations~\cite{berg2013poof,chai2013symbiotic} to perform part detection, due to the relatively large intra-category variations. After that, some weakly-supervised methods emerged that only trained with category labels~\cite{zheng2017learning,fu2017look,wang2018learning}. However, due to the expertise needed for giving fine-grained labels, FGVC approaches are still limited to achieving further progress and also require higher labor costs in practice. Sparse data has become the bottleneck of FGVC in both academic research and industrial applications. Recently, researchers' interests have shifted to saving expert effort during training, \emph{e.g.}, visual recognition with small-sample~\cite{zhu2019image,li2020oslnet,li2020remarnet}, web-supervised learning~\cite{sun2019learning,zhang2020web}, and leveraging layer persons annotations~\cite{choudhury2021curious}. In this paper, we introduce a new lens to FGVC and propose a semi-supervised framework specifically aiming at FGVC tasks with out-of-distribution data.

\subsection{Semi-Supervised Learning}

Semi-supervised learning (SSL) is a popular topic for coarse-grained classification tasks and attract the attention of many researchers. Most related arts to date follows two tracks: (i) \textbf{Pseudo-labeling}~\cite{lee2013pseudo,cascante2020curriculum,nassar2021all} that utilizes model predictions to generate synthetic labels for unlabeled data as training targets, which is also regarded as implicit entropy minimization~\cite{grandvalet2004semi}, and (ii) \textbf{Consistency Regularization}~\cite{bachman2014learning,laine2016temporal,sajjadi2016regularization,tarvainen2017mean,miyato2019virtual} that applies stochastic disturbations on the input data or the model itself and encourages consistent model predictions can be given. Very recently, \cite{sohn2020fixmatch} and its variants~\cite{li2020comatch,hu2021simple} broke the boundary between two trends and proposed a simple yet effective new paradigm by combining pseudo-labeling and consistency regularization. 

Although prominent strides have been made, most of the SSL methods are evaluated and analyzed under an ideal setting that the categories of labeled data and the latent categories of unlabeled data are perfectly aligned. The work of~\cite{oliver2018realistic} first put forward this very question and pointed out that when it comes to realistic settings where out-of-distribution unlabeled data is present, performances of state-of-the-art techniques show drastic degradation with almost no exception. The very recent work of~\cite{su2021realistic} is most relevant to ours. Following~\cite{oliver2018realistic}, it evaluated state-of-the-art SSL methods in the FGVC task and obtained the same conclusion.

In this paper, as verified in Figure~\ref{fig:figure1} (b), we argue that the devil is in the prediction probability space -- it is hard for a model to make consistent predictions on samples with novel categories, which is obstructed to most existing SSL methods that operate on model predictions via whether pseudo-labeling, consistency regularization, or both. This is very intuitive since: (i) for pseudo-labeling methods, inconsistent predictions directly lead to meaningless artificial labels, and (ii) for consistency regularization methods, while model predictions sorely changing across samples, separately pursuing prediction consistency of individual sample also contribute less to discriminative knowledge mining.

\section{Methodology}

\subsection{Overview}

In this paper, focusing on tackling the semi-supervised fine-grained classification challenge with out-of-distribution data, we proposed RelMatch. It is composed by three main components: the feature exacter $\mathcal{F}(\cdot)$, the category predictor $\mathcal{P}_c(\cdot)$, and the relation predictor $\mathcal{P}_r(\cdot,\cdot)$. Then we can use $\mathcal{P}_c(\mathcal{F}(\cdot))$ for category classification of individual samples and $\mathcal{P}_r(\mathcal{P}_c(\mathcal{F}(\cdot)),\mathcal{P}_c(\mathcal{F}(\cdot)))$ for relation classification of sample pairs. Similar to existing methods, RelMatch simultaneously learns with labeled and unlabeled data. During the training phase, each batch both contains labeled samples $\{(x_i,y_i)\}_{i=1}^N \in X$ and unlabeled samples $\{u_j\}_{j=1}^{\mu N} \in U$, where $y_i$ are ground truth labels and $\mu$ is the ratio between the number of labeled samples and unlabeled samples. Generally, $\mu>1$ for better benefiting from the huge quantity of unlabeled samples. At each iteration, RelMatch optimizes with three losses: (i) a supervised category classification loss $\mathcal{L}_c$ to optimize $\mathcal{F}(\cdot)$ and $\mathcal{P}_c(\cdot)$, (ii) a supervised relation classification loss $\mathcal{L}_r$ to optimize $\mathcal{P}_r(\cdot,\cdot)$, and (iii) an unsupervised 
relation prediction loss $\mathcal{L}_{u}$ to optimize $\mathcal{F}(\cdot)$ and $\mathcal{P}_c(\cdot)$. Then the overall loss function is:
\begin{align}
    \mathcal{L}_{total} = \mathcal{L}_{c} + \mathcal{L}_{r} + \mathcal{L}_{u},
\end{align}
Specifically, $\mathcal{L}_c$ is a standard cross-entropy loss:
\begin{align}
    \mathcal{L}_{c} =\frac{-1}{N}\sum_{i=1}^{N} y_i \times \log(\mathcal{P}_c(\mathcal{F}(x_i))),
\end{align}
and the details of $\mathcal{L}_{r}$ and $\mathcal{L}_{u}$ are introduced in Section~\ref{sub-sec-pseudo-labeling} and Section~\ref{sub-sec-consistency-regularization} respectively.

\subsection{Relation-Based Pseudo-Labeling}\label{sub-sec-pseudo-labeling}

\subsubsection{Phylogenetic Tree} 

As Mentioned above, due to the relative compact feature space of the fine-grained classifier, out-of-distribution data, whose latent labels totally out of the training label space, hinder the effectiveness of previous SSL techniques. Instead of directly predicting the categories of individual samples, we need a flexible tool to ``clue'' all these data in. Following the label hierarchy in~\cite{chang2021your}, in this work, we leverage the phylogenetic tree\footnote{We adopt a specific kind of phylogenetic tree named ``cladogram'' whose branch lengths are uniform and do not represent time or relative amount of character change.} of living beings as a bridge between in-distribution and out-of-distribution data. It can be regarded as a rooted tree where categories at different classification levels (\emph{e.g.}, Class, Order, Family, Genus, and Species) are nodes with different depths. After that, as illustrated in Figure~\ref{fig:figure2}, relations of samples can be discretely defined within a \emph{close-set} (\emph{e.g.}, ``Same Family, Different Genus'', Same Genus, Different Species'', etc.). These relations can be expressed in a more formulaic manner. Borrowing the concept of lowest common ancestor (LCA) in graph theory, for a pair of samples $x_i$ and $x_j$ with category labels $y_i$ and $y_j$, their LCA can be formulated by $LCA(y_i,y_j)$. Then the pair relation can be represented by the depth of their LCA $Dep(LCA(y_i,y_j))$. In fact, the LCA depth of a pair of samples is proportional to their distance along with the route on the tree and can be regarded as a measurement of their similarity. In the subsequent parts of this article, we let $S(y_i,y_j)=Dep(LCA(y_i,y_j))$ for simplifying.

\subsubsection{Relation Prediction}

Just to reemphasize, the purpose of RelMatch is conducting individual category prediction, and all designs for relation prediction should serve this ultimate goal. Hence, we let the relation predictor $\mathcal{P}_r(\cdot,\cdot)$ take a pair of predictions from $\mathcal{P}_c(\cdot)$ as sample representation for input, which enables both the feature extractor $\mathcal{F}(\cdot)$ and the classifier $\mathcal{P}_c(\cdot)$ can be optimized by the relation-based supervision. Specifically, let $p=\mathcal{P}_c(\mathcal{F}(x))$ be the category prediction probability, where $p \in \mathbb{R}^C$ and $C$ is the number of categories. Then, with $p_i$ and $p_j$ as the input probability pair, the relation predictor first use vector outer product to model their element-wise correlation as $m_{i,j}=p_i^T \times p_j, m_{i,j} \in \mathbb{R}^{C \times C}$. After this, $m_{i,j}$ is multiplied by a learnable transfer matrix $w_t \in \mathbb{R}^{C \times C \times R}$ along with the softmax function to obtain the final relation prediction probabilities. The predicted results $r_{i,j}$ can be expressed as:
\begin{align}
     r_{i,j} & = \mathcal{P}_r(\mathcal{P}_c(\mathcal{F}(x_i)),\mathcal{P}_c(\mathcal{F}(x_j))) \nonumber \\
             & = Softmax(p_i^T \times p_j \times w_t),
\end{align}
where $r_{i,j} \in \mathbb{R}^R$ and $R$ is the number of discrete relations that equals to the depth of the phylogenetic tree. 

During the training phase, we simply combine the labeled samples $\{(x_i,y_i)\}_{i=1}^N \in X$ and its reverse version at the batch dimension to form a sample pair set $\{(x_i,y_i,x_{N-i},y_{N-i})\}_{i=1}^N \in B_x$. With $s_{i,j}=S(y_i,y_j)$ obtained by species labels as relation ground-truth labels, the supervised relation classification loss can be formulated as:
\vspace{-0.3cm}
\begin{equation}
    \mathcal{L}_{r} =\frac{-1}{N}\sum_{i=1}^{N} w_{s_{i,N-i}} \times s_{i,N-i} \times \log(r_{i,N-i}).
\end{equation}
Note that, when we randomly sample pairs from the phylogenetic tree, the occurrence probabilities of various relations are significantly different, which leads to a long-tailed distribution. To alleviate this problem, we re-weight the losses of pairs according to their ground-truth relations. And $\{w_i\}_{i=1}^R \in W$ is generated according to the statistical probability of each relation level's occurrence, where $\sum_{i=1}^R w_i = 1$.

\begin{figure*}[t]
\centering
\includegraphics[width=1\linewidth]{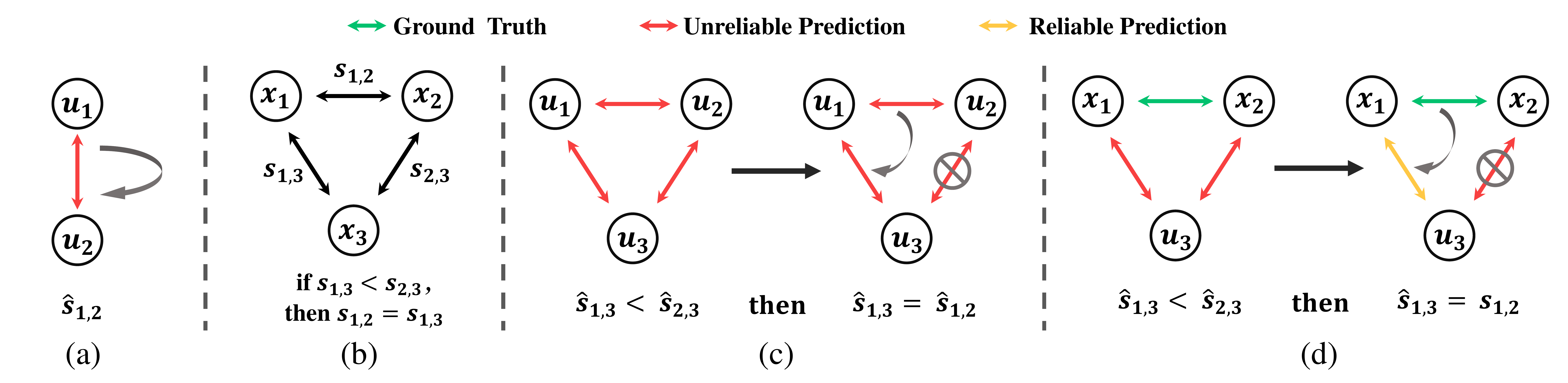}
\caption{(a) \textbf{Pseudo-labeling via relation prediction}: where the predicted relation $s_{1,2}$ is used as the pseudo-label directly. (b) Illustration of the \emph{triplet consistency} we define. (c) \textbf{Triplet consistency regularization}: where all three samples are unlabeled. (d) \textbf{Label Transfer}: where only one unlabeled sample participates, and the other two labeled data introduce the ground-truth relation.}
\label{fig:figure3}
\vspace{-0.3cm}
\end{figure*}
\vspace{-0.1cm}

\subsubsection{Naive Pseudo-Labeling via Relation Prediction}\label{relation_pl}

With predictions of pair relations, we can simply replace the instance-based prediction in previous SSL methods (\emph{e.g.}, pseudo-labeling~\cite{lee2013pseudo}) with the relation-based prediction (shown in Figure~\ref{fig:figure3}~(a)). It seems to be a straight yet intuitive solution that not only enables the model to better learn from out-of-distribution data but also keeps the merits of previous arts. For the relation-based pseudo-labeling, the optimization goal can be formalized as:
\begin{equation}~\label{equation3}
    \mathcal{L'}_{u}=\frac{-1}{\mu N}\sum_{j=1}^{\mu N}\sum_{k=1}^{\mu N}\tau_{j,k} \times w_{\hat{s}_{j,k}} \times \Hat{s}_{j,k} \times log(r_{j,k}),
\end{equation}
where $\hat{s}_{j,k}=argmax(r_{j,k})$ indicates the relation-level pseudo-labels, and $\tau_{j,k}=\hat{s}_{j,k}>t$ is used for sample selection with the confidence threshold $t$. 

For semi-supervised learning, it is all about learning the underlying structure of a large amount of unlabeled data~\cite{nair2019realmix}. So far, we only rely on the phylogenetic tree to build a common label space shared by in-distribution and out-of-distribution data. In the next part, we will take a further step and introduce how the tree structure of fine-grained categories can help us achieve better alignment.

\subsection{Triplet Consistency}\label{sub-sec-consistency-regularization}

\subsubsection{Consistency of Rooted Triplet}\label{sub-sub-sec-rooted-triplet}

A \emph{rooted triplet} is a distinctly leaf-labeled, binary, rooted, unordered tree with three leaves~\cite{jansson2005rooted} (illustrated in Figure~\ref{fig:figure2}). It has been well studied by algorithm researchers that, with a sufficient set of \emph{rooted triplets}, one can reconstruct the unique rooted tree containing all of them~\cite{aho1981inferring}. For an arbitrary leaf node triplet $\{A, B, C\}$ belonging to a rooted phylogenetic tree, if the lowest common ancestor (LCA) of $A$ and $B$ is a proper descendant of the LCA of $A$ and $C$, then the sub-tree they composed is a \emph{rooted triplet}. And there exists an obvious consistency that the LCA of $A$ and $C$ must also be the LCA of $B$ and $C$, which can be expressed as: 

\begin{theorem}\label{theorem1}
 For arbitrary leaf node triplet $\{A, B, C\}$ s.t. $Depth(LCA(A,B))>Depth(LCA(A,C))$, we have $LCA(A,C)=LCA(B,C)$.
\end{theorem}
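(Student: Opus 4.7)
The plan is to leverage the basic tree-theoretic fact that any two ancestors of a common descendant must be comparable under the ancestor relation, since both lie on the unique root-path from that descendant. Given this, I would prove $LCA(A,C)=LCA(B,C)$ by establishing each side is an ancestor of the other and invoking the uniqueness of the LCA.

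First, I would observe that $LCA(A,B)$ and $LCA(A,C)$ are both ancestors of $A$ and are therefore comparable; since $Depth(LCA(A,B))>Depth(LCA(A,C))$, this forces $LCA(A,B)$ to be a proper descendant of $LCA(A,C)$. Combining this with the fact that $LCA(A,B)$ is by definition an ancestor of $B$, transitivity yields that $LCA(A,C)$ is also an ancestor of $B$. Together with its being an ancestor of $C$, this makes $LCA(A,C)$ a common ancestor of $B$ and $C$, hence an ancestor of $LCA(B,C)$.

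For the reverse direction, I would compare $LCA(A,B)$ with $LCA(B,C)$, both of which are ancestors of $B$ and hence comparable. If $LCA(A,B)$ were an ancestor of $LCA(B,C)$, then via $C$ it would become a common ancestor of $A$ and $C$, hence an ancestor of $LCA(A,C)$, forcing $Depth(LCA(A,B))\le Depth(LCA(A,C))$ and contradicting the hypothesis. Therefore $LCA(B,C)$ must be an ancestor of $LCA(A,B)$, and since $LCA(A,B)$ is an ancestor of $A$, $LCA(B,C)$ becomes a common ancestor of $A$ and $C$, hence an ancestor of $LCA(A,C)$. Combining the two containments gives the claimed equality.

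There is no real obstacle here beyond bookkeeping: once the comparability principle for ancestors of a shared descendant is stated, each step reduces to a one-line transitivity argument, and the only care needed is to keep the depth inequality oriented correctly so that the strict descendant/ancestor directions never get flipped. No heavy machinery from phylogenetic combinatorics is required.
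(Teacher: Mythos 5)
Your proof is correct. It differs in structure from the paper's argument, though both are elementary: the paper proves the result as a chain of equalities, first using the depth hypothesis (implicitly via comparability of ancestors of $A$) to place $LCA(A,B)$ strictly below $LCA(A,C)$ and conclude $LCA(A,C)=LCA(LCA(A,B),C)$, and then using that $LCA(A,B)$ is an ancestor of $B$ to identify $LCA(LCA(A,B),C)$ with $LCA(B,C)$. You instead prove the equality by mutual ancestry plus antisymmetry: one containment by transitivity (the depth hypothesis makes $LCA(A,C)$ a common ancestor of $B$ and $C$), and the other by a short contradiction comparing $LCA(A,B)$ with $LCA(B,C)$ along $B$'s root path. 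Your route is a bit longer but more explicit; in particular, your contradiction step does openly the work that the paper's second equality leaves implicit (ruling out that $LCA(A,B)$ sits at or above $LCA(B,C)$), so your version is arguably the more airtight write-up, while the paper's substitution argument is the more compact one. Both rest on the same two primitives: ancestors of a common node form a chain, and any common ancestor of two leaves lies at or above their LCA.
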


\begin{proof}
Since $LCA(A,B)$ is an ancestor of $A$ and also a descendant of $LCA(A,C)$, we can draw the conclusion that $LCA(A,C)=LCA(LCA(A,B),C)$. Also, $LCA(A,B)$ is an ancestor of $B$, which leads to the equivalence that $LCA(LCA(A,B),C)=LCA(B,C)$. Thus, we can obtain Theorem~\ref{theorem1}.
\end{proof}

\subsubsection{Naive Consistency Regularization via Triplet Consistency}\label{triplet_cr}

With our notations, the \emph{triplet consistency} defined by Theorem~\ref{theorem1} can be formulated as $r_{j,l}=r_{k,l}$ for all $\hat{s}_{j,k}>\hat{s}_{j,l}$ (shown in Figure~\ref{fig:figure3}~(b)). Unlike previous consistency regularization methods that only rely on intra-sample consistency. The \emph{triplet consistency} enable interactions among samples, \emph{e.g.}, $r_{j,l}=r_{k,l}$ indicates the model predictions on $u_j, u_k$ should hold consistent relations with model prediction on $u_l$. In this way, we can also obtain a \emph{triplet consistency} regularization (shown in Figure~\ref{fig:figure3}~(c)) for unsupervised optimization as:

\begin{equation}
    \mathcal{L''}_{u}=\frac{1}{\mu N}\sum_{j=1}^{\mu N}\sum_{k=1}^{\mu N}\sum_{l=1}^{\mu N}\mathbb{1}(\hat{s}_{j,k}>\hat{s}_{j,l})\mathcal{H}(r_{j,l},\hat{s}_{k,l}),
\end{equation}

where $\hat{s}_{j,k}=argmax(r_{j,k})$, and $\mathcal{H}(\cdot)$ is an arbitrary function for distance measurement. And when $\mathcal{H}(\cdot)$ is in a re-weighted cross entropy form, \emph{i.e.}, the consistency regularization is achieved by giving pseudo-labels, it can be regarded as an improved version of Equation~\ref{equation3} by introducing interactions between samples. In the following discussion, we will use this form by default.

\subsubsection{Label Transfer via Triplet Consistency}\label{label_transfer}\label{sub-sub-sec-label-transfer}

For an unlabeled \emph{rooted triplet} $\{u_j,u_k,u_l\}$ with $\hat{s}_{j,k}>\hat{s}_{j,l}$, $\hat{s}_{k,l}$ is the pseudo-label for $r_{j,l}$ due to $r_{j,l}=r_{k,l}$. We notice that, with the \emph{triplet consistency}, quality of the pseudo-label is decided by qualities of both the relation prediction $\hat{s}_{k,l}$ and the relation comparison between $\hat{s}_{j,k}$ and $\hat{s}_{j,l}$, where still remains room for improvement. 

To approach the optimal solution, \emph{i.e.,} the proposed RelMatch, we re-sample the \emph{rooted triplet} with two labeled samples and one unlabeled sample as $\{x_j,x_k,u_l\}$. Then instead of unreliable prediction, the pseudo-label $\hat{s}_{k,l}$ is replaced by the ground-truth $s_{k,l}$ now. And once the relation comparison is correct, an exact pseudo-label can be given. In this way, the advantage brought by \emph{triplet consistency} is quite remarkable. It is all about building a bridge between labeled data and unlabeled data -- relation-based labels can be transferred from labeled pairs to unlabeled pairs with a simple relation comparison (shown in Figure~\ref{fig:figure3}~(d)). Ultimately, we reduce the difficulty of generating pseudo-labels from a multi-class classification problem to a binary classification problem. Our unsupervised optimization goal via label transfer is:
\begin{equation}
    \mathcal{L}_{u}=\frac{-1}{\mu N^3}\sum_{j=1}^{N}\sum_{k=1}^{N}\sum_{l=1}^{\mu N}\tau_{k,l} \times w_{s_{k,l}} \times s_{k,l} \times log(r_{j,l}),
\end{equation}
where $\tau_{k,l}=\mathbb{1}(E(s_{j,k})-E(s_{j,l})\geq1)$ serves as the condition function for sample selection, $E(s_{j,k})=\sum_{n=1}^R r_{j,k,n} \times n$ indicates the mathematical expectation of $s_{j,k}$, and $E(s_{j,l})$ is the same.

To summarize, RelMatch can be regarded as a combination of pseudo-labeling and consistency regularization with (i) relation-based pseudo-labels \emph{instead of} instance-based pseudo-labels, (ii) inter-sample consistency regularization \emph{instead of} intra-sample consistency regularization, and (iii) label transfer for pseudo-label generation \emph{instead of} directly predicting them.

\section{Experiments}

\subsection{Datasets}

We adopt two SSL-FGVC benchmark datasets Semi-Aves and Semi-Fungi released by~\cite{su2021realistic} for our comparison experiments and ablation studies. Each of them consists of three sub-sets: labeled samples $(x_i,y_i) \in X$, in-distribution unlabeled samples $u_j \in U_{in}$ which shares the same label space with $X$, and out-of-distribution unlabeled samples $u_k \in U_{out}$ which include categories that are novel but still belong to the phylogenetic tree. We experiment with $U_{in}+U_{out}$ as unlabeled data to simulate realistic applications where we cannot tell whether unlabeled data are in-distribution or not. Consistent with general SSL works, we also keep the setting that only considers $U_{in}$ as unlabeled data for unsupervised optimization. Statistics of two datasets are displayed in Table~\ref{dataset}.

\begin{table}[htbp]
\renewcommand{\arraystretch}{1.3}
\centering
\begin{adjustbox}{width=0.85\linewidth,center}
\begin{tabular}{c|c c c|c c c}
\toprule[1pt]
\multirow{2}{*}{Dataset} & \multicolumn{3}{c|}{Categories} &
\multicolumn{3}{c}{Images}
\cr\cline{2-7} & $X_{in}$ & $U_{in}$ & $U_{out}$ & $X_{in}$ & $U_{in}$ & $U_{out}$\\
\midrule[1pt]
Semi-Aves & $200$ & $200$ & $800$ & $6K$ & $27K$ & $122K$\\
Semi-Fungi & $200$ & $200$ & $1194$ & $4K$ & $13K$ & $65K$\\
\midrule[0.5pt]
\end{tabular}
\end{adjustbox}
\vspace{0.1cm}
\caption{Statistics of Semi-Aves and Semi-Fungi dataset.}
\label{dataset}
\end{table}
\vspace{-0.3cm}

\begin{table*}[t]
\begin{adjustbox}{width=1\linewidth,center}
\makeatletter\def\@captype{table}
\begin{minipage}[t]{0.5\linewidth}
\begin{adjustbox}{width=1\linewidth,center}
\begin{tabular}{c l|c|c|c}
\toprule[1pt]
\multirow{2}{*}{}&\multirow{2}{*}{\textbf{Method}} & \textbf{From Scratch} ($\%$) &
\textbf{From ImageNet} ($\%$) & \textbf{From iNat} ($\%$) 
\cr\cline{3-5} & & Top-1 / Top-5 & Top-1 / Top-5 & Top-1 / Top-5\\
\midrule[1pt]
&Baseline & $20.6\pm0.4$ / $41.7\pm0.7$ & $52.7\pm0.2$ / $78.1\pm0.1$ & $65.4\pm0.4$ / $86.6\pm0.2$\\
\midrule[0.5pt]
\midrule[0.5pt]
\multirow{6}{*}{\begin{rotate}{90}
$U_{in}$
\end{rotate}} 

&Pseudo-Label & $16.7\pm0.2$ / $36.5\pm0.8$ & $54.4\pm0.3$ / $78.8\pm0.3$ & $65.8\pm0.2$ / $86.5\pm0.2$\\
&CPL & $20.5\pm0.5$ / $41.7\pm0.5$ & $53.4\pm0.8$ / $78.3\pm0.5$ & $69.1\pm0.3$ / $87.8\pm0.1$\\
&FixMatch & $28.1\pm0.1$ / $51.8\pm0.6$ & \textcolor{Blue}{$\mathbf{57.4\pm0.8}$} / $78.5\pm0.5$ & \textcolor{Blue}{$\mathbf{70.2\pm0.6}$} / $87.0\pm0.1$\\
&Self-Training & $22.4\pm0.4$ / $44.1\pm0.1$ & $55.5\pm0.1$ / $79.8\pm0.1$ & $67.7\pm0.2$ / $87.5\pm0.2$\\

&MoCo & $28.2\pm0.3$ / $53.0\pm0.1$ & $52.7\pm0.1$ / $78.7\pm0.2$ & $68.6\pm0.1$ / $87.7\pm0.1$\\
&MoCo + Self-Training & $31.9\pm0.1$ / $56.8\pm0.1$ & $55.9\pm0.2$ / \textcolor{Blue}{$\mathbf{80.3\pm0.1}$} & $70.1\pm0.2$ / \textcolor{Blue}{$\mathbf{88.1\pm0.1}$}\\
\midrule[0.5pt]
&\textbf{RelMatch} & $34.5\pm0.2$ / $57.0\pm0.2$ & $54.4\pm0.2$ / $79.4\pm0.3$ & $69.5\pm0.2$ / $88.0\pm0.2$\\
&\textbf{RelMatch + MoCo} & \textcolor{red}{$\mathbf{37.9\pm0.1}$} / \textcolor{red}{$\mathbf{60.8\pm0.1}$} & $54.0\pm0.3$ / $79.9\pm0.2$ & $70.0\pm0.2$ / \textcolor{Blue}{$\mathbf{88.1\pm0.2}$}\\
&\textbf{RelMatch + FixMatch}& \textcolor{Blue}{$\mathbf{36.7\pm0.2}$} / \textcolor{Blue}{$\mathbf{60.4\pm0.6}$} & \textcolor{red}{$\mathbf{59.4\pm0.3}$} / \textcolor{red}{$\mathbf{81.1\pm0.5}$} & \textcolor{red}{$\mathbf{72.1\pm0.3}$} / \textcolor{red}{$\mathbf{88.8\pm0.4}$}\\
\midrule[0.5pt]
\midrule[0.5pt]
\multirow{9}{*}{\begin{rotate}{90}
$U_{in}+U_{out}$
\end{rotate}} 
&Pseudo-Label & $12.2\pm0.8$ / $31.9\pm1.6$ & $52.8\pm0.5$ / $77.8\pm0.1$ & $66.3\pm0.3$ / $86.4\pm0.2$\\
&CPL & $20.2\pm0.5$ / $41.0\pm0.9$ & $52.8\pm0.5$ / $77.8\pm0.1$ & $69.1\pm0.1$ / $87.6\pm0.1$\\
&FixMatch & $19.2\pm0.2$ / $42.6\pm0.6$ & $49.7\pm0.2$ / $72.8\pm0.5$ & $64.2\pm0.2$ / $84.5\pm0.1$\\
&Self-Training & $22.0\pm0.5$ / $43.3\pm0.2$ & \textcolor{Blue}{$\mathbf{55.5\pm0.3}$} / \textcolor{Blue}{$\mathbf{79.7\pm0.2}$} & $67.6\pm0.2$ / $87.6\pm0.1$\\
&MoCo & \textcolor{Blue}{$\mathbf{38.9\pm0.4}$} / $65.4\pm0.3$ & $51.5\pm0.4$ / $77.9\pm0.2$ & $67.6\pm0.1$ / $87.3\pm0.2$\\
&MoCo + Self-Training & \textcolor{red}{$\mathbf{41.2\pm0.2}$} / \textcolor{Blue}{$\mathbf{65.9\pm0.3}$} & $53.9\pm0.2$ / $79.4\pm0.3$ & $68.4\pm0.2$ / $87.6\pm0.2$\\
\midrule[0.5pt]
&\textbf{RelMatch} & $36.2\pm0.2$ / $57.3\pm0.3$ & \textcolor{red}{$\mathbf{55.6\pm0.4}$} / \textcolor{red}{$\mathbf{80.0\pm0.2}$} & \textcolor{Blue}{$\mathbf{69.4\pm0.1}$} / \textcolor{red}{$\mathbf{88.0\pm0.3}$}\\
&\textbf{RelMatch + MoCo} & \textcolor{red}{$\mathbf{41.2\pm0.2}$} / \textcolor{red}{$\mathbf{66.1\pm0.4}$} & $53.0\pm0.2$ / $79.2\pm0.2$ & \textcolor{red}{$\mathbf{69.6\pm0.3}$} / \textcolor{red}{$\mathbf{88.0\pm0.1}$}\\
&\textbf{RelMatch + FixMatch} & $32.3\pm0.3$ / $49.6\pm0.6$ & $55.1\pm0.3$ / $79.2\pm0.3$ & $68.6\pm0.2$ / \textcolor{Blue}{$\mathbf{87.7\pm0.2}$}\\
\midrule[0.5pt]
\end{tabular}
\end{adjustbox}
\caption{Comparison with other SOTA methods on Semi-Aves dataset. The best results are marked in \textcolor{Red}{red}, and the second best results are marked in \textcolor{Blue}{blue}.}
\label{sota_aves}
\end{minipage}

\hspace{0.1cm}
\makeatletter\def\@captype{table}
\begin{minipage}[t]{0.5\textwidth}
\centering
\begin{adjustbox}{width=1\linewidth,center}
\begin{tabular}{c l|c|c|c}
\toprule[1pt]
\multirow{2}{*}{}&\multirow{2}{*}{\textbf{Method}} & \textbf{From Scratch} ($\%$) &
\textbf{From ImageNet} ($\%$) & \textbf{From iNat} ($\%$)
\cr\cline{3-5} && Top-1 / Top-5& Top-1 / Top-5& Top-1 / Top-5\\
\midrule[1pt]
&Baseline & $31.0\pm0.4$ / $54.7\pm0.8$ & $53.8\pm0.4$ / $80.0\pm0.4$ & $52.4\pm0.6$ / $79.5\pm0.5$\\
\midrule[0.5pt]
\midrule[0.5pt]
\multirow{6}{*}{\begin{rotate}{90}
$U_{in}$
\end{rotate}} 

&Pseudo-Label & $19.4\pm0.4$ / $43.2\pm1.5$& $51.5\pm1.2$ / $81.2\pm0.2$& $49.5\pm0.4$ / $78.5\pm0.2$\\
&CPL & $31.4\pm0.6$ / $55.0\pm0.6$& $53.7\pm0.2$ / $80.2\pm0.1$& $53.3\pm0.5$ / $80.0\pm0.5$\\
&FixMatch & $32.2\pm1.0$ / $57.0\pm1.2$ & $56.3\pm0.5$ / $80.4\pm0.5$ & $58.7\pm0.7$ / $81.7\pm0.2$\\
&Self-Training & $32.7\pm0.2$ / $56.9\pm0.2$ & $56.9\pm0.3$ / $81.7\pm0.2$ & $55.7\pm0.3$ / $82.3\pm0.2$\\
&MoCo & $33.6\pm0.2$ / $59.4\pm0.3$ & $55.2\pm0.2$ / $82.9\pm0.2$ & $52.5\pm0.4$ / $79.5\pm0.2$\\
&MoCo + Self-Training & $39.4\pm0.3$ / \textcolor{red}{$\mathbf{64.4\pm0.5}$} & \textcolor{Blue}{$\mathbf{58.2\pm0.5}$} / $84.4\pm0.2$ & $55.2\pm0.5$ / \textcolor{Blue}{$\mathbf{82.9\pm0.2}$}\\
\midrule[0.5pt]
&\textbf{RelMatch} & $38.4\pm0.4$ / $61.4\pm0.3$ & $57.5\pm0.3$ / $83.4\pm0.3$ & \textcolor{Blue}{$\mathbf{56.8\pm0.3}$} / $82.5\pm0.3$\\
&\textbf{RelMatch + MoCo} & \textcolor{red}{$\mathbf{39.9\pm0.6}$} / \textcolor{red}{$\mathbf{64.4\pm0.6}$} & \textcolor{red}{$\mathbf{58.5\pm0.4}$} / \textcolor{red}{$\mathbf{84.7\pm0.2}$} & $56.3\pm0.5$ / $81.1\pm0.6$\\
&\textbf{RelMatch + FixMatch} & \textcolor{Blue}{$\mathbf{38.9\pm0.5}$} / \textcolor{Blue}{$\mathbf{63.2\pm0.5}$} & \textcolor{Blue}{$\mathbf{58.2\pm0.3}$} / \textcolor{Blue}{$\mathbf{84.6\pm0.4}$} & \textcolor{red}{$\mathbf{60.1\pm0.3}$} / \textcolor{red}{$\mathbf{83.3\pm0.2}$}\\
\midrule[0.5pt]
\midrule[0.5pt]
\multirow{9}{*}{\begin{rotate}{90}
$U_{in}+U_{out}$
\end{rotate}} 
&Pseudo-Label & $15.2\pm1.0$ / $40.6\pm1.2$ & $52.4\pm0.2$ / $80.4\pm0.5$ & $49.9\pm0.2$ / $78.5\pm0.3$\\
&CPL & $30.8\pm0.1$ / $54.4\pm0.3$ & $54.2\pm0.2$ / $79.9\pm0.2$ & $53.6\pm0.3$ / $79.9\pm0.2$\\
&FixMatch & $25.2\pm0.3$ / $50.2\pm0.8$ & $51.2\pm0.6$ / $77.6\pm0.3$ & $53.1\pm0.8$ / $79.9\pm0.1$\\
&Self-Training & $32.5\pm0.5$ / $56.3\pm0.3$ & $55.7\pm0.3$ / $81.0\pm0.2$ & $55.2\pm0.2$ / \textcolor{Blue}{$\mathbf{82.0\pm0.3}$}\\
&MoCo & $44.6\pm0.4$ / $72.6\pm0.5$ & $52.9\pm0.3$ / $81.2\pm0.1$ & $51.0\pm0.2$ / $78.5\pm0.3$\\
&MoCo + Self-Training & \textcolor{Blue}{$\mathbf{48.6\pm0.3}$}/ \textcolor{Blue}{$\mathbf{74.7\pm0.2}$} & $55.9\pm0.1$ / \textcolor{Blue}{$\mathbf{82.9\pm0.2}$} & $54.0\pm0.2$ / $81.3\pm0.3$\\
\midrule[0.5pt]
&\textbf{RelMatch} & $39.6\pm0.5$ / $68.8\pm0.7$ & \textcolor{Blue}{$\mathbf{57.6\pm0.2}$} / \textcolor{red}{$\mathbf{83.7\pm0.4}$} & \textcolor{red}{$\mathbf{57.1\pm0.5}$} / \textcolor{red}{$\mathbf{83.4\pm0.4}$}\\
&\textbf{RelMatch + MoCo} & \textcolor{red}{$\mathbf{49.9\pm0.6}$} / \textcolor{red}{$\mathbf{74.8\pm0.2}$} & \textcolor{red}{$\mathbf{58.0\pm0.2}$} / $82.7\pm0.3$ & \textcolor{Blue}{$\mathbf{56.3\pm0.5}$} / $81.2\pm0.3$\\
&\textbf{RelMatch + FixMatch} & $34.7\pm0.3$ / $56.8\pm0.2$ & $54.5\pm0.5$ / $79.1\pm0.4$ & $55.8\pm0.4$ / $81.6\pm0.4$\\
\midrule[0.5pt]
\end{tabular}
\end{adjustbox}
\caption{Comparison with other SOTA methods on Semi-Fungi dataset. The best results are marked in \textcolor{Red}{red}, and the second best results are marked in \textcolor{Blue}{blue}.}
\label{sota_fungi}
\end{minipage}
\end{adjustbox}
\vspace{-0.3cm}
\end{table*}


\subsection{Baseline Methods}

To demonstrate the superiority of RelMatch, we include following methods as baseline for comparison:

\noindent\textbf{(1) Vanilla Supervised Baseline}: The model is trained with only labeled data $X$ and the Corss-Entropy loss.

\noindent\textbf{(2) Pseudo-Labeling}~\cite{lee2013pseudo}: Following the implementation settings in~\cite{oliver2018realistic}, labeled data and unlabeled data are $1:1$ sampled. The method selects unlabeled samples with maximum prediction probabilities greater than a pre-defined threshold, and then generates one-hot pseudo-labels for them by predictions from the model itself.

\noindent\textbf{(3) Curriculum Pseudo-Labeling (CPL)}~\cite{cascante2020curriculum}: Curriculum labeling multiplies the model training to several phases. At each training phase, the model is trained from scratch with only currently labeled set in a supervised manner. And after each phase, a certain proportion of unlabeled samples with highest predictions will be pseudo-labeled and added into the labeled set. The whole training process will keep iterating until all samples are added into the labeled set.

\noindent\textbf{(4) FixMatch}~\cite{sohn2020fixmatch}: FixMatch combines the idea of pseudo-labeling and consistency regularization. For unlabeled data, pseudo-labels are given by their weakly augmented versions and are used to supervise their strong augmented versions.

\noindent\textbf{(5) Self-Training}~\cite{su2021realistic}: ``Self-Training'' is a widely used term, and in~\cite{su2021realistic} it refers specifically to a knowledge distillation~\cite{hinton2015distilling} based procedure. Firstly, a vanilla supervised model is trained with only labeled data $X$ to be the teacher model. And then, a student model is supervised by cross entropy loss on both labeled data and unlabeled data. Labels of unlabeled data are obtained by teacher model's prediction results.

\noindent\textbf{(6) MoCo}~\cite{he2020momentum}: As claimed in~\cite{chen2020big}, self-supervised model can also be a good semi-supervised learner. Restricted by limited computational resource, we adopt MoCo to train the image encoder due to its independence of large batch size. And the whole training process consist of unsupervised pre-training on unlabeled data followed by supervised fine-tuning on labeled data.

Additionally, \cite{su2021realistic} shows that combination of various SSL methods leads to further performance boosting. In this paper, we also include \textbf{(7) Self-Training + MoCo} as a baseline model. 

\subsection{Implementation Details}

For fair comparisons, we follow the experiment settings in~\cite{su2021realistic}. We use ResNet50~\cite{he2016deep} as the backbone network for all experiments. Input images are random-resize-cropped to $224\times224$ during training and simply resized to $224\times224$ during testing. We use SGD with a momentum of $0.9$ and a cosine learning rate decay schedule~\cite{loshchilov2016sgdr} for optimization. To perform comprehensive comparisons with state-of-the-art methods, we train RelMatch from scratch, ImageNet~\cite{russakovsky2015imagenet} pre-trained model, and iNaturalist $2018$ (iNat)~\cite{van2018inaturalist} pre-trained model. Note that, iNat is a large-scale fine-grained dataset that contains $8142$ species with no overlapping to Semi-Aves and Semi-Fungi. Our model is trained for $100k$ iterations and $50k$ iterations for training from scratch and pre-trained models, which is approximately aligned with the training setting of FixMatch in~\cite{su2021realistic}. Besides, the learning rate and weight decay are set to be $\{0.01, 0.001\}$ for training from scratch and $\{0.001, 0.0001\}$ for training from pre-trained models. The batch size is set as $32$ with $\mu=10$. For all experiments, we show the mean value and the standard deviation of $5$ independent runs.

\subsection{Comparison with SOTA Methods}

We show comparison results on Semi-Aves and Semi-Fungi datasets in Table~\ref{sota_aves} and Table~\ref{sota_fungi} respectively. When models are trained from scratch, MoCo~\cite{he2020momentum} shows the great advantage of leveraging abundant data. It achieves impressive performance on $U_{in}$ and obtains further boosting with the participation of $U_{out}$. While other semi-supervised methods are less effective. Instead of directly predicting pseudo-labels, with the label transfer strategy, the proposed RelMatch is able to generate more reliable pseudo-labels without pre-training. Therefore, we surpass MoCo with a large margin on $U_{in}$ and obtain the best performance when coupling with MoCo under both two settings. 

Under the setting of transfer learning, FixMatch~\cite{sohn2020fixmatch} tends to be the dominant technique with in-distribution data. However, it applies pseudo-labeling as the key component and is doomed to suffer performance degradation when faced with out-of-distribution unlabeled data. In comparison, Self-Training~\cite{su2021realistic} is much more robust to novel categories, despite a slight degradation. On the contrary, the proposed RelMatch with shared label space can easily obtain state-of-the-art results on its own with out-of-distribution data. And RelMatch also significantly outperforms all comparison methods when being equipped with FixMatch. In general, the prominent merits of RelMatch are three-fold: (i) through building a shared label space and enabling inter-sample consistency, RelMatch can leverage out-of-distribution data to mine useful knowledge, (ii) through the label transfer strategy, RelMatch can generate more reliable pseudo-labels, and (iii) as a higher-level constraint, RelMatch can offer complementary supervising signals and boost previous SSL techniques. 

\subsection{Ablation Studies}

\subsubsection{Different Relation-based Variants}

To discuss the effectiveness of each component upon the relation-based pseudo-labeling, we conduct ablation studies about its variants introduced above:\textbf{(A) Relation-based Pseudo-Labeling} as introduced in~\ref{relation_pl}, \textbf{(B) Triplet Consistency Regularization} as introduced in~\ref{triplet_cr}, and \textbf{(C) Label Transfer (RelMatch)} as introduced in~\ref{label_transfer}. As shown in Tabel~\ref{variants}, all experimental results are obtained by training the model from scratch. One important discovery is that all these relation-based techniques successfully leverage out-of-distribution unlabeled data and overcome the performance degradation. In addition, with the proposed \emph{triplet consistency} regularization and label transfer strategy, RelMatch delivers greater performance gains when compared with the naive relation-based pseudo-labeling, which indicates the superiorities brought by the inter-sample consistency and the reliable pseudo-labels via label transfer that enable better alignment.

\begin{table}[htbp]
\renewcommand{\arraystretch}{1.3}
\centering
\begin{adjustbox}{width=1\linewidth,center}
\begin{tabular}{c|c c|c c}
\toprule[1pt]
\multirow{2}{*}{\textbf{Variant}} & \multicolumn{2}{c|}{\textbf{Semi-Aves} ($\%$)} &
\multicolumn{2}{c}{\textbf{Semi-Fungi} ($\%$)}
\cr\cline{2-5} & in-distribution & out-of-distribution & in-distribution & out-of-distribution\\
\midrule[1pt]
A & $27.9\pm0.4$ & $28.1\pm0.4$ & $34.2\pm0.3$ & $34.5\pm0.3$\\
B & $30.6\pm0.2$ & $31.2\pm0.3$ & $35.6\pm0.5$ & $36.2\pm0.3$\\
C & $\mathbf{34.5\pm0.2}$ & $\mathbf{36.2\pm0.2}$ & $\mathbf{38.4\pm0.4}$ & $\mathbf{39.6\pm0.5}$\\
\midrule[0.5pt]
\end{tabular}
\end{adjustbox}
\caption{Ablation studies about different relation-based variants.}
\label{variants}
\end{table}
\vspace{-0.5cm}

\begin{table}[htbp]
\renewcommand{\arraystretch}{1.3}
\centering
\begin{adjustbox}{width=1\linewidth,center}
\begin{tabular}{c|c c|c c}
\toprule[1pt]
\multirow{2}{*}{\textbf{Depth}} & \multicolumn{2}{c|}{\textbf{Semi-Aves} ($\%$)} &
\multicolumn{2}{c}{\textbf{Semi-Fungi} ($\%$)}
\cr\cline{2-5} & in-distribution & out-of-distribution & in-distribution & out-of-distribution\\
\midrule[1pt]
2 & $24.3\pm0.4$ & $25.4\pm0.3$ & $32.0.8\pm0.4$ & $32.5\pm0.3$\\
3 & $29.2\pm0.3$ & $30.1\pm0.3$ & $33.8\pm0.2$ & $34.2\pm0.3$\\
4 & $33.9\pm0.2$ & $36.0\pm0.3$  & $36.2\pm0.5$ & $37.8\pm0.4$\\
5 & $\mathbf{34.5\pm0.2}$ & $\mathbf{36.2\pm0.2}$ & $37.8\pm0.4$ & $38.9\pm0.4$\\
6 & N/A & N/A & $38.0\pm0.4$ & $39.3\pm0.4$\\
7 & N/A & N/A & $\mathbf{38.4\pm0.4}$ & $\mathbf{39.6\pm0.5}$\\
\midrule[0.5pt]
\end{tabular}
\end{adjustbox}
\caption{Ablation studies about the depths of phylogenetic tree that participate model training. For ``Aves'' and ``Fungi'', the depths of their phylogenetic tree is $5$ and $7$, respectively.}
\label{tree_depth}
\end{table}
\vspace{-0.6cm}

\begin{table*}[t]
\begin{adjustbox}{width=1\linewidth,center}
\makeatletter\def\@captype{table}
\begin{minipage}[t]{0.67\linewidth}
\begin{adjustbox}{width=1\linewidth,center}
\centering
\begin{tabular}{c|c|c c|c c}
\toprule[1pt]
\multirow{2}{*}{\textbf{Sample Pair}}&\multirow{2}{*}{\textbf{Pseudo-labeling}} & \multicolumn{2}{c|}{\textbf{Semi-Aves} ($\%$)} &
\multicolumn{2}{c}{\textbf{Semi-Fungi} ($\%$)}
\cr\cline{3-6} & & in-distribution & out-of-distribution & in-distribution & out-of-distribution\\
\midrule[1pt]
$U$ versus $U$ & Label Prediction & $75.8\pm1.6$ & $70.1\pm1.8$ & $53.5\pm2.6$ & $49.0\pm3.1$ \\
$X$ versus $U$ & Label Prediction & $78.9\pm1.4$ & $76.7\pm1.7$ & $58.5\pm1.9$ & $55.4\pm2.6$ \\
$X$ versus $U$ & Label Transfer & $\mathbf{82.5\pm0.7}$ & $\mathbf{82.3\pm0.7}$ & $\mathbf{69.8\pm0.9}$ & $\mathbf{67.7\pm1.5}$ \\
\midrule[0.5pt]
\end{tabular}
\end{adjustbox}
\caption{Ablation studies about different pseudo-labeling methods against quality degradation against in-distribution and out-of-distribution unlabeled data.}
\label{relation_degradation}
\end{minipage}
\hspace{0.1cm}
\makeatletter\def\@captype{table}
\begin{minipage}[t]{0.3\textwidth}
\centering
\begin{adjustbox}{width=1\linewidth,center}
\begin{tabular}{c|c|c}
\toprule[1pt]
$\mathbf{U}$ & \textbf{Semi-Aves} ($\%$) & \textbf{Semi-Fungi} ($\%$)\\
\midrule[1pt]
N/A & $20.6\pm0.4$ & $31.0\pm0.4$\\
$U_{in} + U_{out}$ & $31.2\pm0.3$ & $36.2\pm0.3$\\
\midrule[0.5pt]
iNat & $28.2\pm0.7$ & $35.2\pm1.1$\\
ImageNet & $25.7\pm1.5$ & $33.3\pm1.4$\\
\midrule[0.5pt]
\end{tabular}
\end{adjustbox}
\caption{Ablation studies about different unlabeled datasets.}
\label{other_unlabeled_data}
\end{minipage}
\end{adjustbox}
\vspace{-0.2cm}
\end{table*}

\subsubsection{Depth of The Phylogenetic Tree}

The phylogenetic tree plays the most fundamental role in the proposed method. In this section, we conduct ablation studies about different tree depths. Specifically, we progressively reduce the depth of the tree from leaves to the root. All mentioned models are trained from scratch. As shown in Table~\ref{tree_depth}, the effect of tree depth is quite straightforward -- the model performance keeps boosting as the leveraged tree structure goes deeper. It is easy to understand that a deeper tree structure can provide more fine-grained supervision information.

\vspace{-0.2cm}
\subsubsection{Affect of Training Iterations}

To better understand the relation between model performances and numbers of training iterations, we conduct experiments with iteration numbers of $\{1K$, $5K$, $10K$, $15K$, $20K\}$. All models are trained from scratch in this section. As shown in Figure~\ref{fig:figure4}, we can conclude that model performances keep increasing with the increase of training iteration numbers and the margin of improvement will gradually decrease. Note that, as the number of iterations increases, the model will benefit more from the large amount of out-of-distribution data.

\begin{figure}[t]
\centering
\includegraphics[width=1\linewidth]{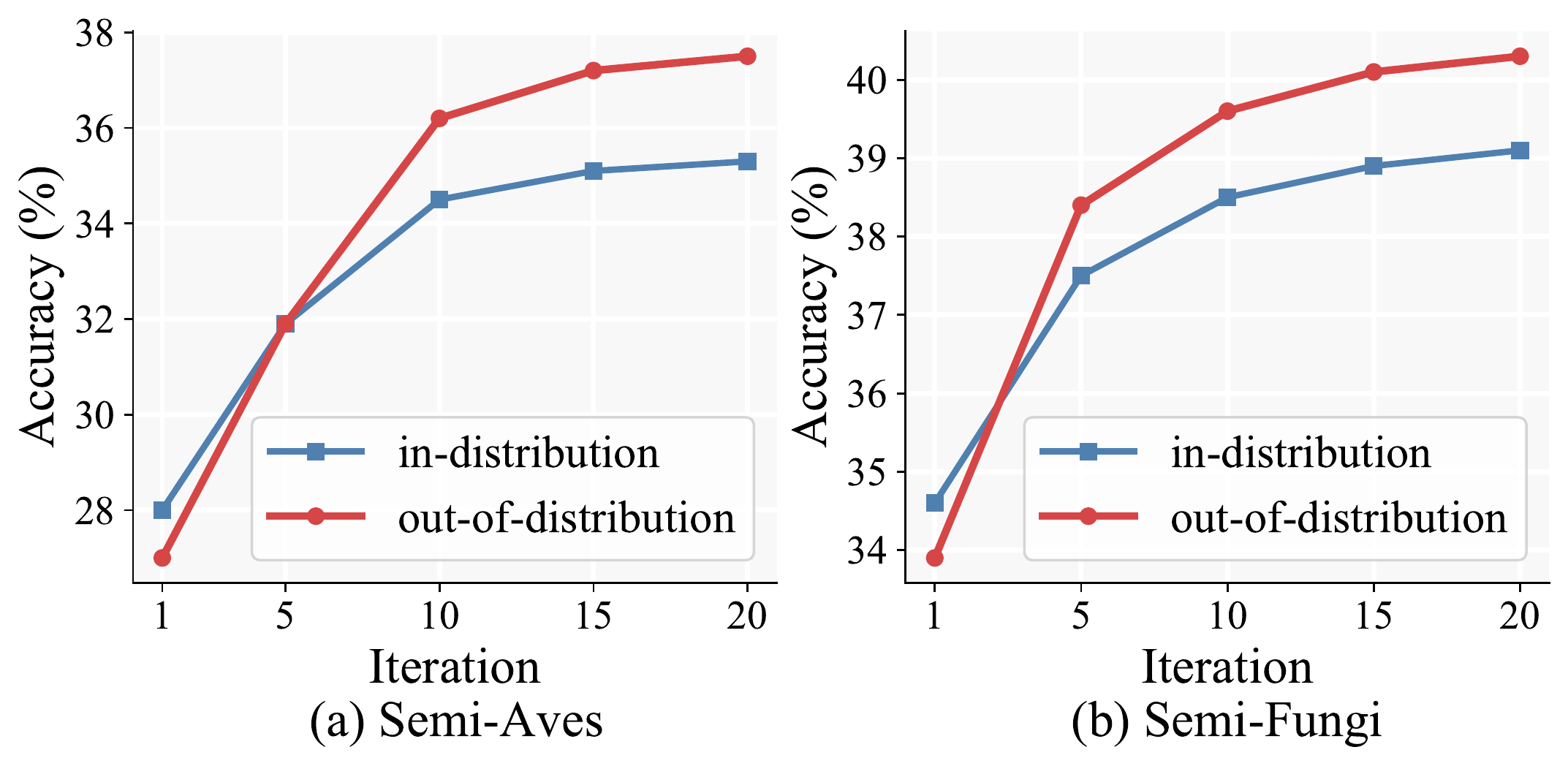}
\vspace{-0.6cm}
\caption{Ablation study on number of iterations.}
\label{fig:figure4}
\vspace{-0.3cm}
\end{figure}

\subsection{Discussion}

\noindent\textbf{How could RelMatch improve the quality of pseudo-labels?} 

To answer this question, we re-split the labeled set of Semi-Aves and Semi-Fungi to quantitatively evaluate pseudo-label quality. Specifically, we divide the $200$ labeled categories in half, then we regard one half as in-distribution categories and the other half as out-of-distribution categories. With the divided categories, we can split the training set and the testing set accordingly. We train the model on the in-distribution training set and test on the in-distribution and out-of-distribution testing set separately. As shown in Table~\ref{relation_degradation}, when the model predicts relations of two unlabeled samples, the prediction accuracy only drops about $5\%$ on out-of-distribution data, which solidly justifies the superiority of our relation-based prediction. When it comes to predicting relations between labeled and unlabeled data, the prediction accuracy naturally improves. Note that, this is not our contribution but just preparation for label transfer. And when we replace directly predicting relations with the proposed label transfer strategy via the \emph{triplet consistency}, the prediction accuracy obtains significant improvement, and the gap between in-distribution and out-of-distribution accuracy continues to decrease, which demonstrates that the label transfer strategy itself does make pseudo-labels more reliable.

\noindent\textbf{Can the learned tree structure generalize to other hierarchical data?}

Till now we have only verified that our model can handle novel categories belonging to the same underlying tree. Here we devolve further and ask -- can the learned tree structure generalize to fine-grained categories that are beyond the original phylogenetic tree of Aves/Fungi? To answer that, we experiment with iNat~\cite{van2018inaturalist} and ImageNet~\cite{russakovsky2015imagenet} datasets as unlabeled data. More specifically, we use the variant of RelMatch without label transfer, \emph{i.e.}, the \emph{triplet consistency} regularization introduced in Section~\ref{triplet_cr}. Results are shown in Table~\ref{other_unlabeled_data}. It is interesting to see that these datasets do seem to contribute meaningfully and yield performance gains, despite with a smaller margin when compared with unlabeled data from the same latent tree. We attribute this to the proposed relation-based prediction still being able to be exploited as a higher-level constraint in other hierarchical datasets.

\section{Conclusion}

In this paper, we approached the fine-grained visual classification problem via a semi-supervised learning setting. Our key contribution is how best to train with out-of-distribution data. Our solution rests with leveraging the underlying tree structure of fine-grained categories to build a relation-based common label space. We further introduced a \emph{triplet consistency} regularization to help with in-distribution and out-of-distribution alignment. We evaluated the proposed method on semi-supervised FGVC benchmark datasets and reported state-of-the-art results.

{\small
\bibliographystyle{ieee_fullname}
\bibliography{egbib}

\begin{thebibliography}{10}\itemsep=-1pt

\bibitem{aho1981inferring}
Alfred~V. Aho, Yehoshua Sagiv, Thomas~G. Szymanski, and Jeffrey~D. Ullman.
\newblock Inferring a tree from lowest common ancestors with an application to
  the optimization of relational expressions.
\newblock {\em SIAM Journal on Computing}, 1981.

\bibitem{bachman2014learning}
Philip Bachman, Ouais Alsharif, and Doina Precup.
\newblock Learning with pseudo-ensembles.
\newblock In {\em NeurIPS}, 2014.

\bibitem{berg2013poof}
Thomas Berg and Peter~N Belhumeur.
\newblock Poof: Part-based one-vs.-one features for fine-grained
  categorization, face verification, and attribute estimation.
\newblock In {\em CVPR}, 2013.

\bibitem{biederman1999subordinate}
Irving Biederman, Suresh Subramaniam, Moshe Bar, Peter Kalocsai, and Jozsef
  Fiser.
\newblock Subordinate-level object classification reexamined.
\newblock {\em Psychological Research}, 1999.

\bibitem{cascante2020curriculum}
Paola Cascante-Bonilla, Fuwen Tan, Yanjun Qi, and Vicente Ordonez.
\newblock Curriculum labeling: Self-paced pseudo-labeling for semi-supervised
  learning.
\newblock {\em arXiv preprint arXiv:2001.06001}, 8, 2020.

\bibitem{chai2013symbiotic}
Yuning Chai, Victor Lempitsky, and Andrew Zisserman.
\newblock Symbiotic segmentation and part localization for fine-grained
  categorization.
\newblock In {\em ICCV}, 2013.

\bibitem{chang2021your}
Dongliang Chang, Kaiyue Pang, Yixiao Zheng, Zhanyu Ma, Yi-Zhe Song, and Jun
  Guo.
\newblock Your" flamingo" is my" bird": Fine-grained, or not.
\newblock In {\em CVPR}, 2021.

\bibitem{chen2020big}
Ting Chen, Simon Kornblith, Kevin Swersky, Mohammad Norouzi, and Geoffrey~E
  Hinton.
\newblock Big self-supervised models are strong semi-supervised learners.
\newblock In {\em NeurIPS}, 2020.

\bibitem{chen2019destruction}
Yue Chen, Yalong Bai, Wei Zhang, and Tao Mei.
\newblock Destruction and construction learning for fine-grained image
  recognition.
\newblock In {\em CVPR}, 2019.

\bibitem{choudhury2021curious}
Subhabrata Choudhury, Iro Laina, Christian Rupprecht, and Andrea Vedaldi.
\newblock The curious layperson: Fine-grained image recognition without expert
  labels.
\newblock In {\em BMVC}, 2021.

\bibitem{du2020fine}
Ruoyi Du, Dongliang Chang, Ayan~Kumar Bhunia, Jiyang Xie, Zhanyu Ma, Yi-Zhe
  Song, and Jun Guo.
\newblock Fine-grained visual classification via progressive multi-granularity
  training of jigsaw patches.
\newblock In {\em ECCV}, 2020.

\bibitem{du2021progressive}
Ruoyi Du, Jiyang Xie, Zhanyu Ma, Dongliang Chang, Yi-Zhe Song, and Jun Guo.
\newblock Progressive learning of category-consistent multi-granularity
  features for fine-grained visual classification.
\newblock {\em IEEE Transactions on Pattern Analysis and Machine Intelligence},
  2021.

\bibitem{dubey2018pairwise}
Abhimanyu Dubey, Otkrist Gupta, Pei Guo, Ramesh Raskar, Ryan Farrell, and
  Nikhil Naik.
\newblock Pairwise confusion for fine-grained visual classification.
\newblock In {\em ECCV}, 2018.

\bibitem{fu2017look}
Jianlong Fu, Heliang Zheng, and Tao Mei.
\newblock Look closer to see better: Recurrent attention convolutional neural
  network for fine-grained image recognition.
\newblock In {\em CVPR}, 2017.

\bibitem{grandvalet2004semi}
Yves Grandvalet and Yoshua Bengio.
\newblock Semi-supervised learning by entropy minimization.
\newblock In {\em NeurIPS}, 2004.

\bibitem{he2020momentum}
Kaiming He, Haoqi Fan, Yuxin Wu, Saining Xie, and Ross Girshick.
\newblock Momentum contrast for unsupervised visual representation learning.
\newblock In {\em CVPR}, 2020.

\bibitem{he2016deep}
Kaiming He, Xiangyu Zhang, Shaoqing Ren, and Jian Sun.
\newblock Deep residual learning for image recognition.
\newblock In {\em CVPR}, 2016.

\bibitem{hinton2015distilling}
Geoffrey Hinton, Oriol Vinyals, and Jeff Dean.
\newblock Distilling the knowledge in a neural network.
\newblock {\em arXiv preprint arXiv:1503.02531}, 2015.

\bibitem{hu2021simple}
Zijian Hu, Zhengyu Yang, Xuefeng Hu, and Ram Nevatia.
\newblock Simple: Similar pseudo label exploitation for semi-supervised
  classification.
\newblock In {\em CVPR}, 2021.

\bibitem{jansson2005rooted}
Jesper Jansson, Joseph H-K Ng, Kunihiko Sadakane, and Wing-Kin Sung.
\newblock Rooted maximum agreement supertrees.
\newblock {\em Algorithmica}, 43(4):293--307, 2005.

\bibitem{krause20133d}
Jonathan Krause, Michael Stark, Jia Deng, and Li Fei-Fei.
\newblock 3d object representations for fine-grained categorization.
\newblock In {\em ICCV workshops}, 2013.

\bibitem{laine2016temporal}
Samuli Laine and Timo Aila.
\newblock Temporal ensembling for semi-supervised learning.
\newblock {\em arXiv preprint arXiv:1610.02242}, 2016.

\bibitem{lee2013pseudo}
Dong-Hyun Lee et~al.
\newblock Pseudo-label: The simple and efficient semi-supervised learning
  method for deep neural networks.
\newblock In {\em ICML Workshop}, 2013.

\bibitem{li2020comatch}
Junnan Li, Caiming Xiong, and Steven Hoi.
\newblock Comatch: Semi-supervised learning with contrastive graph
  regularization.
\newblock {\em arXiv preprint arXiv:2011.11183}, 2020.

\bibitem{li2020oslnet}
Xiaoxu Li, Dongliang Chang, Zhanyu Ma, Zheng-Hua Tan, Jing-Hao Xue, Jie Cao,
  Jingyi Yu, and Jun Guo.
\newblock Oslnet: Deep small-sample classification with an orthogonal softmax
  layer.
\newblock {\em IEEE Transactions on Image Processing}, 2020.

\bibitem{li2020remarnet}
Xiaoxu Li, Liyun Yu, Xiaochen Yang, Zhanyu Ma, Jing-Hao Xue, Jie Cao, and Jun
  Guo.
\newblock Remarnet: Conjoint relation and margin learning for small-sample
  image classification.
\newblock {\em IEEE Transactions on Circuits and Systems for Video Technology},
  2020.

\bibitem{lin2015bilinear}
Tsung-Yu Lin, Aruni RoyChowdhury, and Subhransu Maji.
\newblock Bilinear cnn models for fine-grained visual recognition.
\newblock In {\em ICCV}, 2015.

\bibitem{loshchilov2016sgdr}
Ilya Loshchilov and Frank Hutter.
\newblock Sgdr: Stochastic gradient descent with warm restarts.
\newblock {\em arXiv preprint arXiv:1608.03983}, 2016.

\bibitem{miyato2019virtual}
Takeru Miyato, Shin-Ichi Maeda, Masanori Koyama, and Shin Ishii.
\newblock Virtual adversarial training: A regularization method for supervised
  and semi-supervised learning.
\newblock {\em IEEE Transactions on Pattern Analysis and Machine Intelligence},
  2019.

\bibitem{nair2019realmix}
Varun Nair, Javier~Fuentes Alonso, and Tony Beltramelli.
\newblock Realmix: Towards realistic semi-supervised deep learning algorithms.
\newblock {\em arXiv preprint arXiv:1912.08766}, 2019.

\bibitem{nassar2021all}
Islam Nassar, Samitha Herath, Ehsan Abbasnejad, Wray Buntine, and Gholamreza
  Haffari.
\newblock All labels are not created equal: Enhancing semi-supervision via
  label grouping and co-training.
\newblock In {\em CVPR}, 2021.

\bibitem{Nilsback2008Automated}
M-E. Nilsback and A. Zisserman.
\newblock Automated flower classification over a large number of classes.
\newblock In {\em ICVGIP}, 2008.

\bibitem{oliver2018realistic}
Avital Oliver, Augustus Odena, Colin~A Raffel, Ekin~Dogus Cubuk, and Ian
  Goodfellow.
\newblock Realistic evaluation of deep semi-supervised learning algorithms.
\newblock In {\em NeurIPS}, 2018.

\bibitem{russakovsky2015imagenet}
Olga Russakovsky, Jia Deng, Hao Su, Jonathan Krause, Sanjeev Satheesh, Sean Ma,
  Zhiheng Huang, Andrej Karpathy, Aditya Khosla, Michael Bernstein, et~al.
\newblock Imagenet large scale visual recognition challenge.
\newblock {\em International journal of computer vision}, 2015.

\bibitem{sajjadi2016regularization}
Mehdi Sajjadi, Mehran Javanmardi, and Tolga Tasdizen.
\newblock Regularization with stochastic transformations and perturbations for
  deep semi-supervised learning.
\newblock In {\em NeurIPS}, 2016.

\bibitem{sohn2020fixmatch}
Kihyuk Sohn, David Berthelot, Nicholas Carlini, Zizhao Zhang, Han Zhang,
  Colin~A Raffel, Ekin~Dogus Cubuk, Alexey Kurakin, and Chun-Liang Li.
\newblock Fixmatch: Simplifying semi-supervised learning with consistency and
  confidence.
\newblock In {\em NeurIPS}, 2020.

\bibitem{su2021realistic}
Jong-Chyi Su, Zezhou Cheng, and Subhransu Maji.
\newblock A realistic evaluation of semi-supervised learning for fine-grained
  classification.
\newblock In {\em CVPR}, 2021.

\bibitem{sun2019learning}
Xiaoxiao Sun, Liyi Chen, and Jufeng Yang.
\newblock Learning from web data using adversarial discriminative neural
  networks for fine-grained classification.
\newblock In {\em AAAI}, 2019.

\bibitem{tarvainen2017mean}
Antti Tarvainen and Harri Valpola.
\newblock Mean teachers are better role models: Weight-averaged consistency
  targets improve semi-supervised deep learning results.
\newblock In {\em NeurIPS}, 2017.

\bibitem{van2018inaturalist}
Grant Van~Horn, Oisin Mac~Aodha, Yang Song, Yin Cui, Chen Sun, Alex Shepard,
  Hartwig Adam, Pietro Perona, and Serge Belongie.
\newblock The inaturalist species classification and detection dataset.
\newblock In {\em CVPR}, 2018.

\bibitem{wah2011caltech}
Catherine Wah, Steve Branson, Peter Welinder, Pietro Perona, and Serge
  Belongie.
\newblock The caltech-ucsd birds-200-2011 dataset.
\newblock {\em Technical Report, California Institute of Technology}, 2011.

\bibitem{wang2018learning}
Yaming Wang, Vlad~I Morariu, and Larry~S Davis.
\newblock Learning a discriminative filter bank within a cnn for fine-grained
  recognition.
\newblock In {\em CVPR}, 2018.

\bibitem{wei2021survey}
Xiu-Shen Wei, Yi-Zhe Song, Oisin~Mac Aodha, Jianxin Wu, Yuxin Peng, Jinhui
  Tang, Jian Yang, and Serge Belongie.
\newblock Fine-grained image analysis with deep learning: A survey.
\newblock {\em IEEE Transactions on Pattern Analysis and Machine Intelligence},
  2021.

\bibitem{xiao2015application}
Tianjun Xiao, Yichong Xu, Kuiyuan Yang, Jiaxing Zhang, Yuxin Peng, and Zheng
  Zhang.
\newblock The application of two-level attention models in deep convolutional
  neural network for fine-grained image classification.
\newblock In {\em CVPR}, 2015.

\bibitem{zhang2020web}
Chuanyi Zhang, Yazhou Yao, Huafeng Liu, Guo-Sen Xie, Xiangbo Shu, Tianfei Zhou,
  Zheng Zhang, Fumin Shen, and Zhenmin Tang.
\newblock Web-supervised network with softly update-drop training for
  fine-grained visual classification.
\newblock In {\em AAAI}, 2020.

\bibitem{zheng2017learning}
Heliang Zheng, Jianlong Fu, Tao Mei, and Jiebo Luo.
\newblock Learning multi-attention convolutional neural network for
  fine-grained image recognition.
\newblock In {\em ICCV}, 2017.

\bibitem{zhu2019image}
Fangyi Zhu, Zhanyu Ma, Xiaoxu Li, Guang Chen, Jen-Tzung Chien, Jing-Hao Xue,
  and Jun Guo.
\newblock Image-text dual neural network with decision strategy for
  small-sample image classification.
\newblock {\em Neurocomputing}, 2019.

\end{thebibliography}
}

\end{document}